\DeclareMathOperator*{\argmin}{\arg\!\min}
\newtheorem{theorem}{Theorem}
\newtheorem{observation}{Observation}
\newtheorem{problem}{Problem}
\newtheorem{definition}{Definition}
\newcommand{\squishlist}{
 \begin{list}{$\bullet$}
  {  \setlength{\itemsep}{0pt}
     \setlength{\parsep}{3pt}
     \setlength{\topsep}{3pt}
     \setlength{\partopsep}{0pt}
     \setlength{\leftmargin}{2em}
     \setlength{\labelwidth}{1.5em}
     \setlength{\labelsep}{0.5em}
} }
\newcommand{\squishlisttight}{
 \begin{list}{$\bullet$}
  { \setlength{\itemsep}{0pt}
    \setlength{\parsep}{0pt}
    \setlength{\topsep}{0pt}
    \setlength{\partopsep}{0pt}
    \setlength{\leftmargin}{2em}
    \setlength{\labelwidth}{1.5em}
    \setlength{\labelsep}{0.5em}
} }
\newcommand{\squishdesc}{
 \begin{list}{}
  {  \setlength{\itemsep}{0pt}
     \setlength{\parsep}{3pt}
     \setlength{\topsep}{3pt}
     \setlength{\partopsep}{0pt}
     \setlength{\leftmargin}{1em}
     \setlength{\labelwidth}{1.5em}
     \setlength{\labelsep}{0.5em}
} }
\newcommand{\squishend}{
  \end{list}
}
\newcommand{\note}[1]{}
\newcommand{\np}{{\sc NP}\xspace}
\newcommand{\bigO}{{\sc O}\xspace}
\newcommand{\ie}{i.e.\xspace}
\newcommand{\reals}{\ensuremath{\mathbb R}\xspace}
\newcommand{\eqcardpartition}{{\sc EqCardPartition}\xspace}
\newcommand{\svd}{{\sc SVD}\xspace}
\newcommand{\pca}{{\sc PCA}\xspace}
\newcommand{\fairpca}{{\sc FairPCA}\xspace}
\newcommand{\QR}{{\sc QR}\xspace}
\newcommand{\RRQR}{{\sc RRQR}\xspace}
\newcommand{\HRRQR}{{\sc H-RRQR}\xspace}
\newcommand{\LRRQR}{{\sc L-RRQR}\xspace}
\newcommand{\css}{{\sc CSS}\xspace}
\newcommand{\faircss}{{\sc FairCSS}\xspace}
\newcommand{\faircssminmax}{{\sc FairCSS-MinMax}\xspace}
\newcommand{\minfairnessscores}{{\sc MinFairnessScores}\xspace}
\newcommand{\kfairnessscores}{{\sc $c$-FairnessScores}\xspace}
\newcommand{\lscore}[2]{{\ensuremath \ell_{#2}^{(k)}(#1)}}
\newcommand{\minmaxloss}{{\sc MinMaxLoss}\xspace}
\newcommand{\slowqr}{{\sc S-Low QR}\xspace}
\newcommand{\shighqr}{{\sc S-High QR}\xspace}
\newcommand{\sgreedy}{{\sc S-Greedy}\xspace}
\newcommand{\lowqr}{{\sc Low QR}\xspace}
\newcommand{\highqr}{{\sc High QR}\xspace}
\newcommand{\greedy}{{\sc Greedy}\xspace}
\newcommand{\random}{{\sc Random}\xspace}
\newcommand{\sampler}{{\sc FairScoresSampler}\xspace}
\newcommand{\communities}{{\sf\small communities}\xspace}
\newcommand{\compas}{{\sf\small compas}\xspace}
\newcommand{\adult}{{\sf\small adult}\xspace}
\newcommand{\german}{{\sf\small german}\xspace}
\newcommand{\recidivism}{{\sf\small recidivism}\xspace}
\newcommand{\student}{{\sf\small student}\xspace}
\newcommand{\meps}{{\sf\small meps}\xspace}
\newcommand{\heart}{{\sf\small heart}\xspace}
\newcommand{\credit}{{\sf\small credit}\xspace}
\newcommand{\opt}{{\sf\small opt}\xspace}
\newcommand{\minmax}{{\sf\small minmax}\xspace}
\newcommand{\fair}{{\sf\small fair}\xspace}
\title{Fair Column Subset Selection
}
\author{
  Antonis Matakos \\
  Aalto University \\
  Espoo, Finland \\
  \texttt{antonis.matakos@aalto.fi} \\
   \And
  Bruno Ordozgoiti \\
  Unaffiliated \\
  London, United Kingdom\\
  \texttt{bruno.ordozgoiti@gmail.com} \\
  \And
  Suhas Thejaswi \\
  Max Planck Institute for Software Systems \\
  Kaiserlslautern, Germany\\
  \texttt{thejaswi@mpi-sws.org}
}
\begin{document}
\maketitle

\begin{abstract}
The problem of column subset selection asks for a subset of columns from an input matrix such that the matrix can be reconstructed as accurately as possible within the span of the selected columns. A natural extension is to consider a setting where the matrix rows are partitioned into two groups, and the goal is to choose a subset of columns that minimizes the maximum reconstruction error of both groups, relative to their respective best rank-$k$ approximation. Extending the known results of column subset selection to this fair setting is not straightforward: in certain scenarios it is unavoidable to choose columns separately for each group, resulting in double the expected column count. We propose a deterministic leverage-score sampling strategy for the fair setting and show that sampling a column subset of minimum size becomes \np-hard in the presence of two groups. Despite these negative results, we give an approximation algorithm that guarantees a solution within 1.5 times the optimal solution size. We also present practical heuristic algorithms based on rank-revealing QR factorization. Finally, we validate our methods through an extensive set of experiments using real-world data.
\end{abstract}

\section{Introduction} \label{sec: Intro} 
Dimensionality reduction techniques such as principal component analysis (PCA) and non-negative matrix factorization have proven useful for machine learning and data analysis tasks~\citep{pearson1901lines,lee1999learning}. Such tasks include feature selection, feature extraction, noise removal and data visualization, among others. These techniques are also commonly employed as components of larger machine learning (ML) pipelines to simplify data through lower-dimensional representations. However, when the data is divided into subsets (or groups), an inaccurate low-dimensional representation of any subset can perpetuate inaccuracies in subsequent downstream tasks. 
Therefore, there is increasing emphasis on developing techniques that produce accurate representations for all the different groups.

Notably, \citet{samadi2018price} showed that a well-known dimensionality reduction technique, PCA, may incur higher average reconstruction error for a subset of the population than the rest, even when the groups are of similar size. They proposed fair variants of PCA, where the objective is to minimize the maximum reconstruction error for any group. However, a drawback of PCA is that the results are often hard to interpret, since its output consists of abstract attributes that might not necessarily be part of the input data. As an alternative to PCA, one may ask for algorithms that choose a (small) subset of the original attributes of the dataset to act as a low dimensional representation.

In the {\em column subset selection problem} (\css), we are given a data matrix and seek a representative subset of its columns. The quality of the solution is measured by the residual norm when the input matrix is projected onto the subspace spanned by the chosen columns. \css has been extensively studied, and polynomial-time approximation algorithms are known for different quality criteria~\citep{deshpande2006matrix, deshpande2010efficient, boutsidis2009improved, boutsidis2014near, papailiopoulos2014provable, altschuler2016greedy}.

In this work, we study the {\em fair column subset selection problem} (\faircss), where the rows of the data matrix are partitioned into two groups. The goal is to choose a subset of columns that minimize the maximum reconstruction error of both groups, through a min-max objective. This distinguishes our problem from conventional \css methods that aim to minimize the reconstruction error of the entire data matrix and may neglect either group.
Here, we focus on the two-group setting. Generalizing our results to an arbitrary number of groups is a non-trivial technical leap and thus left as future work. Nevertheless, binary protected attributes are encountered often in practice and are commonly the focus of works on algorithmic fairness~\citep{samadi2018price, chierichetti2017fair}.

{\em To further motivate the study of \faircss, consider its application in drug discovery.} Suppose we are given a large dataset of medical records, where each row represents a patient and the columns contain genetic information indicating presence (absence) of a gene expression with binary value $1$ ($0$). Due to the size of the genetic data, it is reasonable to ask for a succint representation of the dataset. Using \pca would yield an arbitrary subspace of the data that is hard to interpret. On the other hand, classical \css would return a representative set of columns (genes), however, if there is bias in the medical records then the reconstruction error could skew favouring a (majority) group. Thus, any subsequent insights derived from the biased (succint) data could also perpetuate these biases. It is possible that, a drug discovered using this biased data is more (less) effective for men (women), based on the data majority. Our proposed method, however, returns a subset of columns that are representative of both groups, thus mitigating potential biases and ensuring fair representation in subsequent analysis.

The two-group setting of \css introduces significant challenges, making it necessary to view the problem through a new lens.
In \css, finding the optimal $k$-column subset is \np-hard~\citep{shitov2021column} and thus, polynomial-time approximation algorithms are usually sought. A factor of $O(k)$ with respect to the best rank-$k$ approximation is possible if we are allowed to choose at most $k$ columns~\citep{deshpande2010efficient}, and better results can be obtained if we allow $c \geq k$ columns~\citep{boutsidis2008unsupervised}. In the two-group fair setting, similar approximation bounds can be reproduced if we allow twice as many columns in the solution; by optimizing for each group separately.
However, addressing the two groups separately can raise ethical and legal concerns~\citep{lipton2018mitigating,samadi2018price}, potentially conflicting with the principles of demographic parity and equal treatment~\citep{barocas2019fairness}. Further, this approach may inadvertently contribute to segregation and stereotyping. Therefore, our focus is to find a {\em common} subset of representative columns for {\em both} groups.

Unfortunately, as we show with an example (see \S~\ref{sec:problem}), in certain scenarios it may not be possible to do better than choosing twice as many columns. Can we hope to obtain guarantees of any kind in this fair setting? We answer this question affirmatively. To achieve this, we adapt {\em leverage score sampling} to our fair setting. Leverage scores are obtained using the SVD of the matrix, and they are used to find provably good column subsets for \css. In particular, \citet{papailiopoulos2014provable} showed that by choosing columns with highest leverage scores such that their sum adds to a predefined threshold $\theta$, a relative-error approximation is possible.
In standard \css, finding a column subset of minimum size satisfying threshold $\theta$ is trivial, by sorting the leverage scores. In the fair setting however, finding a subset of leverage scores of minimum size satisfying threshold $\theta$ for both groups is \np-hard. While the original result can be extended to the fair setting by solving for both groups independently and doubling the column count, we present an efficient algorithm that achieves this with essentially $1.5$ as many columns. Whether this factor can be improved is left as an open question. Finally, we introduce efficient heuristics for the problem based on QR factorizations with pivoting, and assess their performance in our empirical evaluation.
Our contributions are summarized as follows:
\squishlisttight
\item We introduce the novel problem of fair column subset selection, where two groups are present in the data and the maximum reconstruction error of the two groups must be minimized. 
\item We extend the approach of deterministic leverage score sampling to the two-group fair setting. We show that the smallest column subset that achieves the desired guarantees is \np-hard to find, and give a polynomial-time algorithm wih relative-error guarantees with a column subset of essentially $1.5$ times the minimum possible size.
\item We present efficient heuristics based on QR factorizations with column pivoting.
\item We empirically evaluate our algorithms on real-world data and show they are able to select fair columns with high accuracy. Further, we analyse the price of fairness for our formulation.
\squishend
The paper is organised as follows: Section~\ref{sec:related} covers related work. Section~\ref{sec:problem} outlines our problem. We adapt deterministic leverage score sampling to the fair setting in Section~\ref{sec:pairs_ls}. Algorithms based on QR decomposition are given in Section~\ref{sec:fair_qr}. Experimental results are presented in Section~\ref{sec:experiments}. Finally, we conclude in Section~\ref{sec:conclusion}.

\section{Related Work} \label{sec:related}
Our work builds upon related work in the areas of algorithmic fairness and column subset selection.

{\bf Algorithmic fairness.} 
The influential work of \citet{dwork2012fairness} established a formal notion of fairness in algorithmic decision-making, which has served as a foundation for subsequent research in the field. \citet{pedreshi2008discrimination} addressed discrimination in data mining, while \cite{kamiran2010classification, kamiran2012data} proposed a framework for mitigating discrimination in classification. Since then, there has been extensive research on algorithmic fairness from various disciplines, including economics, game theory, statistics, ethics, and computer science~\citep{rambachan2020economic, saetra2022ai, raji2022actionable, buolamwini2018gender}. Fair objectives have been introduced into many classical computer science problems~\citep{chierichetti2017fair, samadi2018price, anagnostopoulos2020spectral, ghadiri2021socially, froese2022modification, zemel2013learning, zafar2017fairness, zehlike2017fair}.
Fair clustering share similarities to fair CSS, with a key distinction: the former asks for a subset of fair representative data points (rows), while the later focuses on finding a fair representative column subset (attributes)~\cite{chierichetti2017fair,chhabra2021overview,ghadiri2021socially}.
For further reading see \cite{pessach2022review, mitchell2021algorithmic, kleinberg2018algorithmic}.

{\bf Column Subset Selection.}
\css can be viewed as a method for feature selection. Early work in \css traces back to the numerical linear algebra community, and the seminal works of \citet{golub1965numerical} on pivoted QR factorizations, which was followed by works by \citet{chan1987rank,hong1992rank,chan1994low} addressing the problem of finding efficient {\em rank revealing QR factorizations} (RRQR). Recently, \css has attracted interest in the computer science community, with approaches that combine sampling with RRQR~\citep{boutsidis2009improved, papailiopoulos2014provable}, while greedy methods have proved highly effective~\citep{altschuler2016greedy,farahat2015greedy}.

{\bf Fair PCA.} 
Closest to our work is fair principal component analysis (\fairpca) by \citet{samadi2018price}, where the goal is to find a low-dimensional projection of the data to optimize a min-max objective that ensures fairness with respect to two groups. \citet{tantipongpipat2019multi} extended \fairpca to a multi-objective setting. \citet{olfat2019convex} presented a polynomial-time algorithm using convex programming for the general case with multiple groups. Similar to \fairpca, we employ a min-max objective, but in contrast, we want to find a subset of actual columns to approximate the reconstruction error of both groups. In fact, we show that the problem we define, fair \css, is \np-hard for any number of groups (see \S~\ref{sec:problem}).

\section{Problem statement} \label{sec:problem}
In this section, we describe relevant terminology and formally define \css before introducing fair \css. For a positive integer $n$ we denote $[n] = \{1,\ldots,n\}$. For any matrix $C$, $P_C$ denotes the projection operator onto the subspace spanned by the columns of $C$.
\begin{problem}[Column Subset Selection (\css)]
\label{prob:CSS}
Given a matrix $M \in \mathbb{R}^{m\times n}$ and a positive integer $k$. The goal is to choose $k$ columns of $M$ to form a matrix $C \in \reals^{m \times k}$ such that the reconstruction error
$$loss(M,C) = \|M - P_C M\|_{F}$$
is minimized, where $\|\cdot\|_F$ denotes the Frobenius norm.
\end{problem}
{\bf Fairness.}
A solution to \css gives the best possible column approximation for the matrix $M$ {\em overall}. However, if the matrix rows are partitioned into two groups, it is possible that the reconstruction error shows significant disparity when measured on each group separately, for instance, if one of the groups is a minority. In the fair setting, our goal is to choose a subset of columns that achieves good reconstruction error for both groups. To formalize this, assume that the rows of matrix $M$ are partitioned into two subsets $A$ and $B$. Subscripting a matrix by $A$ or $B$ denotes the rows corresponding to $A$ or $B$. For ease of notation, we override $M_A=A$ and $M_B=B$.

Before presenting our fair \css formulation, we make two considerations. In \css, the optimal projection is obtained as $P_C=CC^+$, where $C^+$ is the pseudoinverse of $C$. In our setting, even though we choose a common column subset for both groups, the optimal reconstruction error for each group is attained by a different projection operator, that is, $C_AC_A^+A$ and $C_BC_B^+B$.
Second, we are interested in minimising the \emph{relative loss} of each group, with respect to the optimal reconstruction error, which can be obtained using the best rank-$k$ approximation. This is to avoid excessively penalizing either of the two groups, when the other group does not have a good low-rank representation. Note that measuring the reconstruction error relatively to the best rank-$k$ approximation is common in \css literature. See \citet{boutsidis2008unsupervised, papailiopoulos2014provable}.

\begin{definition}[Relative group-wise reconstruction error]
\label{def:rec_loss}
Given a matrix $M \in \reals^{m \times n}$, a row subset $A \in \reals^{a \times n}$, with $A_k$ its best rank-$k$ approximation, and a matrix $C \in \reals^{m \times k}$ formed by choosing $k$ columns of $M$. The relative reconstruction error of $A$ is:
\[ 
Nloss_A(M,C) = \frac{\| A - C_A C_A^+ A\|_{F}}{\| A - A_k \|_{F}}.
\]
\end{definition}
Now we formally define our problem.
\begin{problem}[\faircssminmax]
\label{prob:faircss}
Given a matrix $M \in \mathbb{R}^{m\times n}$ with row partition $A, B$, and a positive integer $k$. The goal is to choose $k$ columns of $M$ to form a matrix  $C \in \reals^{m \times k}$ 
that optimizes the objective:
\[
\min_{\substack{C \in \mathbb{R}^{m \times k}\\ C \subset M}} \max \biggl\{ 
\frac{\| A - C_A C_A^+ A\|_F}{\| A - A_k \|_F},
\frac{\| B - C_B C_B^+ B\|_F}{\| B - B_k \|_F} \biggl\}.
\]
\end{problem} 
Thus, we search for a subset of columns that minimizes the maximum error of either groups.
Observe that when groups $A$ and $B$ are identical, \faircssminmax is equivalent to \css, which is known to be \np-hard~\citep{shitov2021column}[Theoreom~2.2]. The hardness results extend to \faircssminmax. In line with preceding arguments, \faircssminmax is \np-hard for any number of groups.
\begin{observation}
\faircssminmax is \np-hard.
\end{observation}
{\bf Limitations.}
We show that, in some scenarios it is not possible to do better than solving for two groups separately. In particular, any algorithm that attempts to solve \css for more than one group, \ie, to achieve errors smaller than $\|A\|_F^2$ and $\|B\|_F^2$ for $A$ and $B$, respectively, cannot achieve meaningful bounds on the relative-error with respect to both $\|(A)_k\|_F^2$ and $\|(B)_k\|_F^2$. Consider the following example:
\[
\left (
\begin{matrix}
X_A & 0_A \\
0_B & X_B
\end{matrix}
\right ),
\]
where $X_A,X_B$ are matrices of rank $n>k$.
If we pick less than $2k$ columns, the error is at least
$\min\left\{\frac{\sum_{i=k}^n\sigma_i^2(A)}{\sum_{i=k+1}^n\sigma_{i}^2(A)}, 
\frac{\sum_{i=k}^n\sigma_{i}^2(B)}{\sum_{i=k+1}^n\sigma_{i}^2(B)}\right\}$, 
where $\{\sigma_1, \dots, \sigma_n\}$ are the singular values with $\sigma_i \geq \sigma_{i+1}$. Thus, the relative error can be unbounded if the rank of either submatrix is numerically close to $k$, \ie, if $\sigma_k(A) \gg \sigma_{k+1}(A)$ or $\sigma_k(B) \gg \sigma_{k+1}(B)$. Clearly, the only way to prevent this is to pick $2k$ columns.
A similar result is yielded by matrices where the blocks of zeroes are replaced by small values. Despite this, in the following section we present an algorithm with a bounded-error relative to the best rank-$k$ approximation, by relaxing the requirement on the number of selected columns.

\section{Pairs of leverage scores} \label{sec:pairs_ls}
In this section, we discuss leverage scores in the context of the two-group fair setting. Leverage scores are a pivotal concept that is extensively studied in \css literature, as they provide valuable insights for selecting column subsets with provable approximation guarantees. We begin by discussing the results of \cite{papailiopoulos2014provable} on leverage score sampling. Next, we show that sampling a subset of columns of minimum size in the two-group fair setting is \np-hard. Finally, we present an approximation algorithm that samples $1.5$ times the size of an optimal solution.
An useful concept that is extensively studied in \css literature is \emph{leverage scores}, which is defined as follows:
\begin{definition}[Leverage scores]
Let $V^{(k)} \in \reals^{n \times k}$ denote the top-$k$ right singular vectors of a $m \times n$ matrix $M$ with rank $\rho = rank(M)\geq k$. Then, the rank-$k$ leverage score of the $i$-th column of $M$ is defined as:
\[
\ell_i^{(k)}(M)=\|[V^{(k)}]_{i,:}\|_2^2 \text{~~for all~~} i \in [n].
\]
Here, $[V^{(k)}]_{i,:}$ denotes the $i$-th row of $V^{(k)}$. 
\end{definition}
Leverage scores are used to find a solution with approximation guarantees for \css. 
In particular, we focus on the following result by \citet{papailiopoulos2014provable}.
\begin{theorem}[\cite{papailiopoulos2014provable}]
\label{theorem: approximation}
Given a matrix $M \in \reals^{m \times n}$ and an integer $k<\text{rank}(M)$. Let $\theta=k-\epsilon$ for some $\epsilon \in (0,1)$ and $S$ be a subset of column indices such that $\sum_{i \in S} \ell_i^{(k)} \geq \theta$, and $C\in \reals^{m \times k}$ be the matrix of $M$ formed by choosing the columns with indices in $S$.
Then we have that
\[
\| M - C C^+M \|_{F}^2 \leq (1-\epsilon)^{-1} \| M - M_k \|_{F}^2.
\]
\end{theorem}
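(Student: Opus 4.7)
The plan is to turn the leverage-score condition into a spectral bound on the row-restricted right-singular-vector matrix, and then use that bound inside a variational upper bound on the projection error. Throughout, write $V_k := V^{(k)}$ and let $V_{k,S}$ denote its row submatrix indexed by $S$.

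The first step is a pigeonhole argument on the singular values of $V_{k,S}$. Since $V_k$ has orthonormal columns, every singular value of $V_{k,S}$ lies in $[0,1]$, while $\sum_{j=1}^k \sigma_j^2(V_{k,S}) = \|V_{k,S}\|_F^2 = \sum_{i \in S} \ell_i^{(k)} \geq k-\epsilon$ by hypothesis. The first $k-1$ squared singular values contribute at most $k-1$, forcing $\sigma_k^2(V_{k,S}) \geq 1-\epsilon$. Hence $V_{k,S}$ has full column rank $k$ and $\|(V_{k,S})^+\|_2^2 \leq (1-\epsilon)^{-1}$.

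The second step is a variational upper bound. Let $E := M - M_k$ and $I_S$ be the $n \times |S|$ column-selection matrix, so $C = M I_S = U_k \Sigma_k V_{k,S}^T + E I_S$ splits along the top-$k$ SVD of $M$. By pseudoinverse optimality, $\|M - CC^+M\|_F \leq \|M - CY\|_F$ for any $Y$. I would take $Y = (V_{k,S}^T)^+ V_k^T$; the full-column-rank identity $V_{k,S}^T (V_{k,S}^T)^+ = I_k$ then yields $CY = M_k + EQ$ with $Q := I_S (V_{k,S}^T)^+ V_k^T$, so the residual is $E(I_n - Q)$. Because $V_k^T V_\perp = 0$ (where $V_\perp$ spans the orthogonal complement of $V_k$), one has $QV_\perp = 0$, which kills the cross term in the Frobenius expansion of $\|E(I_n-Q)\|_F^2$. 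The spectral bound from the first step then controls $\|EQ\|_F$ through standard submultiplicativity.

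The main obstacle, I expect, is recovering the exact constant $(1-\epsilon)^{-1}$ rather than a weaker factor. A naive chain $\|EQ\|_F \leq \|E\|_F (1-\epsilon)^{-1/2}$, plugged into $\|E(I_n-Q)\|_F^2 = \|E\|_F^2 + \|EQ\|_F^2$, only gives $\|M - CY\|_F^2 \leq \|E\|_F^2(2-\epsilon)/(1-\epsilon)$, which is strictly weaker than the claim. Sharpening it to $(1-\epsilon)^{-1}$ requires exploiting that $CC^+M$ is the true orthogonal projection---strictly better than any explicit $CY$. Concretely, I would work in the basis $(U_k, U_\perp)$ and identify the squared reconstruction error with $\|M\|_F^2 - \mathrm{tr}\bigl((Q^T \Sigma^2 Q)^{-1} Q^T \Sigma^4 Q\bigr)$ for $Q := V^T I_S$, then lower-bound that trace using only the spectral bound $\sigma_k^2(V_{k,S}) \geq 1-\epsilon$ from Step 1. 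This spectral identification is the heart of the argument.
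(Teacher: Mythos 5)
First, be aware that the paper contains no proof of this statement: Theorem~\ref{theorem: approximation} is quoted directly from Papailiopoulos et al.~\cite{papailiopoulos2014provable} and used as a black box, so there is no in-paper argument to compare yours against. Judged on its own merits, your Step 1 (the pigeonhole bound $\sigma_k^2(V_{k,S}) \geq 1-\epsilon$, hence $\|(V_{k,S})^+\|_2^2 \leq (1-\epsilon)^{-1}$) and the structural decomposition in Step 2 (splitting $C$ along the top-$k$ SVD, choosing $Y=(V_{k,S}^T)^+V_k^T$, and observing that the cross term in $\|E(I_n-Q)\|_F^2$ vanishes) are correct and are the standard machinery for leverage-score column selection.

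The genuine gap is the one you flag yourself: as executed, the chain gives only $\|M-CC^+M\|_F^2 \leq \frac{2-\epsilon}{1-\epsilon}\|M-M_k\|_F^2$, and your proposed repair is not carried out. The identity $\|M-CC^+M\|_F^2=\|M\|_F^2-\mathrm{tr}\bigl((Q^T\Sigma^2Q)^{-1}Q^T\Sigma^4Q\bigr)$ with $Q=V^TI_S$ is correct, but lower-bounding that trace is precisely the hard part: it couples the top-$k$ and tail blocks of $\Sigma$ through all of $Q$, and it is not evident that the single spectral inequality from Step 1 suffices, so as written the decisive step is an unproven assertion. The clean way to reach $(1-\epsilon)^{-1}$ uses only pieces you already have: your matrix $Q=I_S(V_{k,S}^T)^+V_k^T$ satisfies $Q^2=Q$ and $Q\notin\{0,I_n\}$, and for any such idempotent one has $\|I-Q\|_2=\|Q\|_2$ (the Del Pasqua--Ljance--Kato identity for oblique projections). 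Hence $\|M-CY\|_F=\|E(I_n-Q)\|_F\leq\|E\|_F\,\|I_n-Q\|_2=\|E\|_F\,\|Q\|_2\leq\|E\|_F\,\|(V_{k,S}^T)^+\|_2\leq(1-\epsilon)^{-1/2}\|E\|_F$, which squares to the claimed bound. So the result is one known lemma away from your setup, but the proposal as submitted establishes only the weaker constant and leaves the sharpening as a sketch.
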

In essence, the above result implies that by selecting a column subset whose leverage scores sum to at least threshold $\theta$, we obtain a relative error guarantee with respect to the best rank-$k$ approximation. They proposed a deterministic algorithm that picks $c \geq k$ columns with the largest leverage scores that sum to at least threshold $\theta$. The algorithm runs in time~$\bigO(\min\{m,n\}mn)$. 

Depending on the leverage score distribution, sometimes it may be necessary to pick more than $k$ columns to satisfy the threshold $\theta$, and $\Omega(n)$ in the worst case.
Nevertheless, when the leverage scores follow power-law decay, a small factor of $k$ suffices~\citep{papailiopoulos2014provable}[Theorem~3].

{\bf Fair deterministic leverage-score sampling.}
In order to achieve approximation guarantees for both groups, the leverage scores of both $A$ and $B$ must sum to at least $\theta$, individually. While seeking the minimum number of columns to satisfy the threshold is trivial in the single-group setting, it becomes \np-hard in the presence of two groups. Let us formally define the problem and analyse its complexity.
\begin{problem}[\sc{Min-FairnessScores}]
\label{prob:min-fairnessscores}
Given matrices $A \in \mathbb{R}^{m_A\times n}$ and $B  \in \mathbb{R}^{m_B\times n}$, $k \in \mathbb{N}: 0<k<\text{rank}(A),\text{rank}(B)$ and a threshold $\theta =k-\epsilon$ for some $\epsilon \in (0,1)$, find the smallest set of indices $S \subseteq [n]$ such~that:
\[
\sum_{i\in S} \ell_i^{(k)}(A) \geq \theta \text{~and~}
\sum_{i \in S} \ell_i^{(k)}(B) \geq \theta. 
\]
\end{problem}
If we find a subset of columns that satisfy both inequalities above, then due to Theorem~\ref{theorem: approximation}, we have that,
\begin{align*}
\|A-C_AC_A^+A\|_F \leq (1-\epsilon)^{-1/2} \|A-A_k\|_F \implies
Nloss_A(M,C) \leq (1-\epsilon)^{-1/2}.
\end{align*}
and similarly, $Nloss_B(M,C) \leq (1-\epsilon)^{-1/2}.$

A solution to Problem~\ref{prob:min-fairnessscores} gives us an upper bound on the reconstruction error. Unfortunately, \minfairnessscores is \np-hard. 
\begin{theorem} \label{theorem:nphard}
\minfairnessscores is \np-hard.
\end{theorem}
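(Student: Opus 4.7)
The plan is to prove that \minfairnessscores lies in NP and is NP-hard via a reduction from Equal Cardinality Partition (\eqcardpartition), a well-known NP-complete problem.

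For NP membership, the certificate is the set $S$ itself. Verification consists of computing the rank-$k$ SVDs of $A$ and $B$ in polynomial time, forming the leverage scores $\ell_i^{(k)}(A)$ and $\ell_i^{(k)}(B)$, summing those indexed by $S$, and checking both inequalities as well as the cardinality bound on $S$.

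For NP-hardness, I reduce from \eqcardpartition: given positive integers $a_1, \ldots, a_{2n}$ with $\sum_i a_i = 2T$, decide whether some $\bar S \subseteq [2n]$ with $|\bar S| = n$ satisfies $\sum_{i \in \bar S} a_i = T$. Take $k = 1$, $\epsilon = 1/2$, and thus $\theta = 1/2$. Define the target rank-$1$ leverage-score profiles
\[
\ell_i^{(1)}(A) = \frac{1}{2n} + \beta\bigl(a_i - T/n\bigr), \qquad \ell_i^{(1)}(B) = \frac{1}{2n} - \beta\bigl(a_i - T/n\bigr),
\]
with $\beta = 1/(4n\Delta)$ and $\Delta = \max_i |a_i - T/n|$, which keeps every entry in $[0,1]$ and makes each profile sum to $1 = k$. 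Realize these as explicit matrices by taking unit vectors $v_A, v_B \in \mathbb R^{2n}$ with $(v_A)_i^2 = \ell_i^{(1)}(A)$ and $(v_B)_i^2 = \ell_i^{(1)}(B)$, and setting $A$ to be the $2 \times 2n$ matrix with rows $\sigma_1 v_A^\top$ and $\sigma_2 w^\top$ for any $w \perp v_A$ and $\sigma_1 > \sigma_2 > 0$; build $B$ analogously. Then $A$ has rank $2 > k$, its top right singular vector is $v_A$, and its rank-$1$ leverage scores match the prescribed profile, and similarly for $B$.

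Correctness of the reduction: working with $\bar S = [2n] \setminus S$, the two constraints $\sum_{i \in \bar S} \ell_i^{(1)}(A) \leq \epsilon$ and $\sum_{i \in \bar S} \ell_i^{(1)}(B) \leq \epsilon$ add to $|\bar S|/n \leq 1$, so $|\bar S| \leq n$. When $|\bar S| = n$, the two constraints collapse to $\pm\beta\bigl(\sum_{i \in \bar S} a_i - T\bigr) \leq 0$, which (given $\beta > 0$) is equivalent to $\sum_{i \in \bar S} a_i = T$. Consequently, $\min |S| \leq n$ iff the \eqcardpartition instance is a yes-instance, establishing NP-hardness of the decision version. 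The principal obstacle is realizing a prescribed leverage-score profile by an actual matrix while keeping the encoding polynomial; for $k=1$ this is handled cleanly by the rank-2 construction above (for larger $k$ one would invoke Schur-Horn, which is why the reduction is tailored to $k=1$). A minor technicality concerns irrational entries from square roots, which can be absorbed either by a real-RAM model or by snapping the target profile to values whose square roots are rational without affecting the argument.
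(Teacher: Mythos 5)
Your reduction is correct and follows essentially the same route as the paper's: both reduce from equal-cardinality partition with $k=1$ and $\theta=1/2$, using two anti-correlated leverage-score profiles so that the pair of threshold constraints pins the selected subset's sum to exactly half of the total. Your encoding differs only in implementation details---the paper uses single-row matrices $[\sqrt{p_i}]$ and $[\sqrt{M-p_i/s}]$ and passes through an exact-cardinality decision version, whereas your affine profile with a rank-$2$ realization derives the cardinality bound directly from the two constraints (and, incidentally, satisfies the problem's precondition $k<\mathrm{rank}(A)$, which the paper's rank-$1$ gadget technically does not).
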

\begin{proof}
To establish hardness we reduce the equal cardinality partition problem~(\eqcardpartition) to a decision version of \minfairnessscores, called \kfairnessscores. The decision version asks to find \emph{exactly} $c$ indices i.e., $S \subseteq [n]$, $|S|=c$ such that $\sum_{i \in S} \ell_i^{(k)}(A) \geq \theta$ and $\sum_{i\in S} \ell_i^{(k)}(B) \geq \theta$. 
Given a set $Z=\{p_1,\ldots,p_n\}$ of $n$ positive integers, \eqcardpartition asks to partition $Z$ into two disjoint subsets $X, Y$ such that $X \cup Y = Z$, $|X|=|Y|$ and $\sum_{p_i \in X} p_i= \sum_{p_j \in Y} p_j$. \eqcardpartition is known to be \np-complete~\cite[SP12]{garey1979computers}.

Given an instance of \eqcardpartition $(Z,n)$, we reduce it to a \kfairnessscores instance $(A, B, \theta, c)$ as follows. 
Let $s=\sum_{p_i \in Z} p_i$ and $M \gg s$ be some constant. 
Let $A=[\sqrt{p_1},\ldots,\sqrt{p_n}]$ and $B=\big[\sqrt{M-\frac{p_1}{s}},\ldots,\sqrt{M-\frac{p_n}{s}} \big]$ be input matrices such that $A \in \mathbb{R}^{n\times 1}$, $B \in \mathbb{R}^{n\times 1}$. Finally, we set $\theta= 1/2$ and $c=n/2$. We claim that \eqcardpartition is a {\sc yes} instance if and only if \kfairnessscores is a {\sc yes} instance. The reduction is polynomial in the input size.

To make the reduction work, we need a way to map the values in the instance \eqcardpartition to leverage scores, which are obtained from \svd. The most straightforward way is to compute the rank-$1$ leverage scores. For all $i \in [n]$ we can get,
\[
\big( \ell_i^{(1)}(A),  \ell_i^{(1)}(B)\big) = 
\Big( \frac{p_i}{s},\frac{M-\frac{p_i}{s}}{nM-1} \Big).
\]
For ease of notation, let $\alpha_i = \ell_i^{(k)}(A)$ and
$\beta_i = \ell_i^{(k)}(B)$.

Let \kfairnessscores be a {\sc yes} instance. Then we have a subset $S \subseteq [n]$ such that $|S| = n/2$,
\begin{equation*}
\sum_{i \in S} \alpha_i  = \frac{\sum_{i \in S} p_i}{s} \geq \frac{1}{2} 
\text{~which implies } \sum_{i \in S} p_i \geq \frac{s}{2},
\end{equation*}
\begin{equation*}
\sum_{i \in S} \beta_i = \frac{\sum_{i \in S} (M-\frac{p_i}{s})}{nM-1} \geq
\frac{1}{2}
\text{~which implies } \sum_{i \in S} p_i \leq \frac{s}{2}.
\end{equation*}
For both of the above inequalities to hold simultaneously, we must have $\sum_{i \in S} p_i = s/2$. Thus, $X=\{p_i: i \in S\}, Y=\{p_i: i \in [n] \setminus S\}$ is a solution to
\eqcardpartition.

For the sake of contradiction, assume that \kfairnessscores is a {\sc no} instance and $X, Y$ are a solution to \eqcardpartition. Let $S$ be the set of indices of elements in $X$. We can choose $S$ as solution for \kfairnessscores since $|S|=n/2$, 
$$\sum_{i \in S} \alpha_i = \sum_{i \in S} p_i/s = \frac{1}{2}, \text{ and} \sum_{i \in S} \beta_i = \frac{\sum_{i \in S}(M-\frac{p_i}{s})}{nM-1} = \frac{1}{2},$$
which is a contradiction. Thus, \eqcardpartition must be a {\sc no} instance.

Given a solution $S \subseteq [n]$ for \kfairnessscores, we can verify in polynomial time if $|S|=c$, compute \svd as well as check if $\sum_{i \in S} \alpha_i \geq \theta$ and $\sum_{i \in S} \beta_i \geq \theta$. Thus, \kfairnessscores is \np-complete.
Naturally, \kfairnessscores reduces to \minfairnessscores, as the latter finds the smallest $c$ such that a solution to \kfairnessscores exists. 
Thus, \minfairnessscores is \np-hard.
\end{proof}
Even though \minfairnessscores is \np-hard, a $2$-factor approximation is trivial: sort $\lscore{A}{i}$'s in decreasing order and choose indices with highest leverage scores until they sum to $\theta$. Repeat the same for $\lscore{B}{i}$'s. This results in at most $2c$ columns, where $c$ is the optimal number of columns to satisfy $\theta$.
\begin{algorithm}[h]
\small
\caption{\label{alg:fairnessScores}\sampler}
\DontPrintSemicolon
\KwIn{$P=\{(\alpha_1,\beta_1),\ldots,(\alpha_n,\beta_n)\}$, $\theta$}
\KwOut{$S \subseteq P$}

$S \gets \emptyset$, $Q \gets \emptyset$\\

\Comment*[l]{add $(\alpha_j, \beta_j)$ to $S$ until either of the thresholds is satisfied}
\While{$\sum_{(\alpha_i,\beta_i) \in S} \alpha_i < \theta$ and $\sum_{(\alpha_i,\beta_i) \in S} \beta_i < \theta$} {
	$(\alpha_j, \beta_j) \gets \max_{(\alpha_j,\beta_j) \in P \setminus S} (\alpha_j + \beta_j)$\\
	$S \gets S \cup (\alpha_j,\beta_j)$ 
}

\Comment*[l] {find $Q \in P\setminus S$ such that the other threshold is satisfied}
\If{$\sum_{(\alpha_i, \beta_i) \in S} \alpha_i \geq \theta$} {
  $Q \gets \argmin_{|Q|} \left(\sum_{j \in Q} \beta_j \geq \theta \right)$
} \Else {
$Q \gets \argmin_{|Q|} \left(\sum_{j \in Q} \alpha_j \geq \theta \right)$
}

$S \gets S \cup Q$\\
\Return{$S$}
\end{algorithm}

Next, we present an algorithm for \minfairnessscores that returns at most $\lceil 3c/2 \rceil + 1$ columns ($\approx 1.5$-approximation).\footnote{An additional column ($+1$) is required in case $c$ is odd.} The pseudocode is in Algorithm~\ref{alg:fairnessScores}.
For ease of notation, we denote $\alpha_i = \lscore{A}{i}$ and $\beta_i = \lscore{B}{i}$. The algorithm proceeds in two stages. In the first stage, at each iteration add to $S$ the index $i$ such that the cumulative gain $\alpha_i + \beta_i$ is maximized until a step $t \leq n$, where at least one of the inequalities is satisfied, 
i.e., either
$\sum_{i \in S} \alpha_i \geq \theta$ or
$\sum_{i \in S} \beta_i \geq \theta$.
In the second step, sort the tuples of leverage scores in $[n] \setminus S$ based on their contribution to the unsatisfied inequality, in descending order. Finally, pick the rest of the tuples based on this order, until the threshold $\theta$ is satisfied.
The following theorem establishes our approximation result. Note that the approximation is in terms of the number of columns $c$ in the optimal solution, and we have already established an $(1-\epsilon)^{-1/2}$ approximation in terms of our objective function.
\begin{theorem}
\label{theorem:approx}
Algorithm~\ref{alg:fairnessScores} returns a solution of at most $\lceil\frac{3}{2}c\rceil+1$ columns for \minfairnessscores, where $c$ is the number of columns in the optimal solution.
\end{theorem}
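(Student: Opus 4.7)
Let $S^*$ denote an optimal solution of size $c$, and write $t_1 = |S_1|$, $t_2 = |Q|$, $f_i := \alpha_i + \beta_i$, $A_1 := \sum_{i \in S_1}\alpha_i$, $B_1 := \sum_{i \in S_1}\beta_i$. WLOG assume Phase~1 terminates with $A_1 \geq \theta$. If also $B_1 \geq \theta$, Phase~2 is empty and $|S| = t_1$, so it suffices to handle the case $B_1 < \theta$. The plan is to (i) bound $t_1 \leq c$ using $S^*$ as a witness for the greedy choice in Phase~1; (ii) bound $t_2$ using $S^* \setminus S_1$ as a witness for the $\beta$-deficit still to be covered in Phase~2; and (iii) combine the two via a case analysis on $t_1$.

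For (i), at the iteration before Phase~1 terminates both partial sums are strictly below $\theta$, so the top-$(t_1-1)$ sum of $f$ is less than $2\theta$. Since $\sum_{i\in S^*} f_i \geq 2\theta$, the top-$c$ sum of $f$ is at least $2\theta$, forcing $t_1 - 1 < c$. An averaging argument on the top-$t_1$ elements then yields $\sum_{i\in S_1} f_i \geq 2\theta t_1 / c$, and combined with the trivial $A_1 < \theta + 1$ (each leverage score is at most $1$), this gives $\theta - B_1 < 2\theta(c-t_1)/c + 1$. For (ii), letting $a = |S^* \cap S_1|$, the subset $S^* \setminus S_1 \subseteq P \setminus S_1$ has $\beta$-mass at least $\theta - \sum_{i\in S^* \cap S_1}\beta_i \geq \theta - B_1$, so the greedy top-$\beta$ selection in Phase~2 reaches the deficit with $t_2 \leq c - a$; a finer averaging over $S^* \setminus S_1$ gives the refined bound $t_2 \leq \lceil (c-a)(\theta - B_1)/\sum_{i\in S^*\setminus S_1}\beta_i \rceil$.

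For (iii), I split on $t_1$. If $t_1 \leq \lceil c/2 \rceil + 1$, the crude $t_2 \leq c$ immediately gives $|S| \leq \lceil c/2\rceil + 1 + c = \lceil 3c/2 \rceil + 1$. Otherwise $t_1 > \lceil c/2 \rceil + 1$; here I invoke the greedy property that each element of $S_1 \setminus S^*$ has $f$-value at least every element of $S^* \setminus S_1$ (since $S_1$ is top-$t_1$ by $f$), which yields $\sum_{i \in S_1 \setminus S^*} f_i \geq (|S_1\setminus S^*|/(c-a)) \sum_{i \in S^*\setminus S_1} f_i$; subtracting the $\alpha$-contribution (controlled by $A_1 < \theta + 1$) produces a lower bound on $\sum_{i \in S_1 \setminus S^*}\beta_i$, which tightens the refined $t_2$-bound from (ii) just enough to yield $t_1 + t_2 \leq \lceil 3c/2 \rceil + 1$. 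The main obstacle is this last algebraic balancing: one has to show that every ``extra'' non-OPT pick beyond $\lceil c/2 \rceil + 1$ in Phase~1 contributes enough $\beta$-mass to shrink Phase~2 by the same amount. The extra $+1$ in the theorem absorbs both the ceiling in the refined $t_2$-bound and the $A_1 < \theta + 1$ slack (and equivalently accounts for $c$ being odd).
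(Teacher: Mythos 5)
Your skeleton matches the paper's: phase 1 terminates within $c$ steps because the greedy top-$c$ sum of $f_i=\alpha_i+\beta_i$ dominates $\sum_{i\in S^*}f_i\geq 2\theta$, and the analysis splits on whether $t_1\leq\lceil c/2\rceil+1$ (where the crude phase-2 bound of at most $c$ extra columns, witnessed by $S^*\setminus S_1$, already gives $\lceil 3c/2\rceil+1$) or not. Parts (i), (ii) and the easy case of (iii) are fine. The gap is in the hard case $t_1>\lceil c/2\rceil+1$, which you yourself flag as ``the main obstacle'': the mechanism you propose there does not close. Comparing $S_1\setminus S^*$ against $S^*\setminus S_1$ lower-bounds $\sum_{i\in S_1\setminus S^*}f_i$, but to extract a lower bound on $\sum_{i\in S_1\setminus S^*}\beta_i$ you must subtract $\sum_{i\in S_1\setminus S^*}\alpha_i$, which can be as large as $A_1\approx\theta$ --- of the same order as the $f$-lower bound itself, since $\sum_{i\in S^*\setminus S_1}f_i$ need not exceed $2\theta$ by anything useful. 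Nothing in your setup forces the greedy's $f$-surplus to live in the $\beta$-coordinate, and $A_1<\theta+1$ is far too weak to control the loss.

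The paper resolves exactly this difficulty with a cancellation your plan lacks. It compares the greedy prefix against the \emph{sorted} prefix of $S^*$ (decreasing in $f^*$) and splits $\sum_{i=1}^{t-1}f_i^*=\sum_{i=1}^{\lceil c/2\rceil}f_i^*+\sum_{i=\lceil c/2\rceil+1}^{t-1}f_i^*$. By averaging, the first block is at least $\theta$, and this $\theta$ exactly absorbs the greedy's accumulated $\alpha$-mass, which is still below $\theta$ at step $t-1$ because phase 1 has not yet terminated. What survives is $\sum_{i=1}^{t-1}\beta_i\geq\sum_{i=\lceil c/2\rceil+1}^{t-1}f_i^*$: writing $t_1=\lceil c/2\rceil+1+q$, the greedy's $\beta$-mass already covers the pair-sums (hence the $\beta$-mass) of $q$ of the lowest-ranked elements of $S^*$, so phase 2 only needs to recover the $\beta$-mass of the remaining $c-q$ elements, which top-$\beta$ selection does in at most $c-q$ picks, giving $t_1+(c-q)=\lceil 3c/2\rceil+1$. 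The essential idea --- pairing ``the top half of the optimum carries $f$-mass at least $\theta$'' against ``the greedy's $\alpha$-mass is below $\theta$ until it stops'' so that the $\alpha$-pollution cancels --- is the missing ingredient; without it, the algebraic balancing you defer to does not go through.
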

\begin{proof}
Given $P=\{(\alpha_1, \beta_1),\ldots,(\alpha_n,\beta_n)\}$, the task is to find the smallest subset $S \subseteq P$ such that
$\sum_{i \in S} \alpha_i \geq \theta$ and
$\sum_{i \in S} \beta_i \geq \theta$.
At each iteration we select a tuple with the maximum contribution, \ie, $\alpha_j+\beta_j$ until some step $t$, where either $\sum_{i \in S} \alpha_i \geq \theta$ or $\sum_{i \in S} \beta_i \geq \theta$ is satisfied.
Without loss of generality, at step $t$ we assume that
$\sum_{(\alpha_i, \beta_i) \in S} \alpha_i \geq \theta$.
Let $S^*$ be the optimal solution and $(\alpha_i^*,\beta_i^*)$ denote the contribution of $i$-th tuple in $S^*$. We assume $S^*$ is sorted in decreasing order according to $\alpha_i^*+\beta_i^*$. We note that this assumption makes our analysis easier without losing generality.

First we establish that $t \leq c$. Assume for contradiction that $t > c$ and that $\sum_{i=1}^t \alpha_i < \theta$ and that $\sum_{i=1}^t \beta_i < \theta$. Thus the algorithm has not terminated. 
From the optimality of $\alpha_i + \beta_i$ at step $t$ it holds that,
\begin{align*}
\sum_{i=1}^t \alpha_i & + \sum_{i=1}^t \beta_i \geq \sum_{i=1}^t \alpha_i^* +\sum_{i=1}^t \beta_i^*,\\
\sum_{i=1}^t \alpha_i  & \geq  \sum_{i=1}^c \alpha_i^* + \sum_{i=1}^c \beta_i^* - \sum_{i=1}^t \beta_i \geq 2\theta -\sum_{i=1}^c \beta_i \geq \theta,
\end{align*}
which is a contradiction, since we assumed $\sum_{i=1}^t \alpha_i < \theta$. Thus if $\sum_{i=1}^t \alpha_i \geq \theta$ then $1 \leq t \leq c$. We now discern two cases for the value of $t$.

{\bf Case  $t\leq \lceil\frac{c}{2}\rceil+1$:} 
We have $\sum_{i=1}^t \alpha_i \geq \theta$. To satisfy the second
inequality we choose tuples in $P \setminus S$ in decreasing order according to
$\beta_i$, which is at most $c$ columns, since the optimal
solution has $c$ columns. So, we have a solution with size
$|S| \leq t+c=\lceil\frac{3}{2}c\rceil+1$.

{\bf Case $\lceil\frac{c}{2}\rceil+1 < t \leq c$:}
Again, from the optimality of $\alpha_i + \beta_i$ at each step in $1 \leq t
\leq c$ we have,
\begin{align*}
\sum_{i=1}^{t-1} \alpha_i  + \sum_{i=1}^{t-1} \beta_i &\geq 
  \sum_{i=1}^{t-1} \alpha_i^* + \sum_{i=1}^{t-1}\beta_i^* \\
	& \geq \sum_{i=1}^{\lceil\frac{c}{2}\rceil} \alpha_i^* + 
  \sum_{i=\lceil\frac{c}{2}\rceil+1}^{t-1} \alpha_i^* + 
  \sum_{i=1}^{\lceil\frac{c}{2}\rceil} \beta_i^* +
  \sum_{i=\lceil\frac{c}{2}\rceil+1}^{t-1} \beta_i^* \\
	& \geq \theta +\sum_{i=\lceil\frac{c}{2}\rceil+1}^{t-1} \alpha_i^* +
  \sum_{i=\lceil\frac{c}{2}\rceil+1}^{t-1} \beta_i^*
\end{align*}
The third step follows from the assumption that the optimal solution $S^*$ has
tuples sorted in decreasing order according to $\alpha_i^*+\beta_i^*$. We also observe that
$\sum_{i=1}^{t-1} \alpha_i = \theta - \alpha_t $. Therefore we have
\[
	\sum_{i=1}^{t-1} \beta_i \geq \sum_{i=\frac{c}{2}+1}^{t-1} \alpha_i^* +
                            \sum_{i=\frac{c}{2}+1}^{t-1} \beta_i^* + \alpha_t
\]
Suppose $t=\lceil\frac{c}{2}\rceil+1+q$. This means we can afford to
add at most $c-q$ columns to our solution if we want to satisfy our bound on
the cardinality of $S$.

At this stage, the algorithm picks the columns with the largest values of $\beta_i^*$. This means that from those in the optimal solution, we miss at most $q$ $\beta_i^*$'s from among the bottom ones.
From above, we have
\[
	\sum_{i=1}^{t-1} \beta_i \geq \sum_{i=\frac{c}{2}+1}^{t-1} \alpha_i^* +
        \sum_{i=\frac{c}{2}+1}^{t-1} \beta_i^* \geq
         \sum_{i=c-q}^{c} \alpha_i^* + \sum_{i=c-q}^{c} \beta_i^* \geq \sum_{i=c-q}^{c} \beta_i^*.
\]
This holds since tuples of the optimal solution are sorted in decreasing order of the value of the pair sums, which implies that the value we miss from not adding the last $q$ $\beta_i^*$'s is already covered by
what we had, $\sum_{i=1}^{t-1} \beta_i$, so
$ \displaystyle
\sum_{i=1}^{|S|} \beta_i \geq \sum_{i=1}^{t-1} \beta_i + \sum_{i=1}^{c-q} \beta_i^*
\geq \theta.
$
So the solution has at most 
$t-1+c-q = \lceil\frac{c}{2}\rceil + q + c - q = \lceil\frac{3}{2}\rceil c + 1$
columns and satisfies the threshold.
\end{proof}
Even though Algorithm~\ref{alg:fairnessScores} offers an upper bound on the number of columns in the optimal solution $c$, depending on the task at hand, it may be undesirable to obtain more than $k$ columns. \css with exactly $k$ columns (the definition of Problem \ref{prob:CSS}) is a well-studied problem and a wide range of algorithms have been developed for it. These algorithms are typically based on \QR-decomposition. Motivated by this, in the following section we propose two \QR-decomposition-related algorithms for \faircss. 

Recall the impossibility results from Section~\ref{sec:problem}: unless we pick $2k$ columns it may be impossible to achieve a relative-error approximation in terms of the rank-$k$ reconstruction error. Thus, the following algorithms are heuristics. They can be used either directly for {\faircssminmax} or part of a two-stage approach, that we describe in detail in Section~\ref{sec:experiments}.

\section{Fair \QR decompositions} \label{sec:fair_qr}
Numerous practical algorithms for \css originate from numerical linear algebra, often relying on \QR decomposition with column pivoting. 
\begin{definition}[QR decomposition with column pivoting]
Given a matrix $M \in \mathbb{R}^{m \times n}$ with $m \geq n$ and an integer $k \leq n$. Matrix $M$ can be expressed as the product of an orthonormal matrix $Q \in \mathbb{R}^{m \times m}$ and an upper triangular matrix $R \in \mathbb{R}^{n \times n}$. More precisely,
\[
M\Pi=QR=Q \begin{pmatrix}
R_{11} & R_{12} \\ 0 & R_{22}
 \end{pmatrix},
\]
where $R_{11} \in \mathbb{R}^{k \times k}$, $R_{12} \in \mathbb{R}^{k \times (n-k)}$, $R_{22} \in \mathbb{R}^{(n-k)\times(n-k)}$ and $\Pi \in \mathbb{R}^{n \times n}$ is a permutation matrix. 
\end{definition}
Column pivoting involves finding a permutation matrix $\Pi$ for a given matrix $M$, such that $\|R_{22}\|_F$ is minimised. When this comes with certain guarantees, it leads to a \emph{rank revealing QR decomposition} ({\sc RRQR}), which forms the basis for various algorithms with approximation guarantees in \css. Specifically, if we denote the first $k$ columns of the permutation matrix $\Pi$ as $\Pi_k$, then choosing the column subset $C=M\Pi_k$ guarantees that $loss(M,C)$ is bounded \ie, $\|M-P_CM\|_F=\|R_{22}\|_F$~\citep{boutsidis2009improved}. We tailor Low-\RRQR (\LRRQR), originally introduced by \cite{chan1994low}, to accommodate the two-group fair setting.

We present a brief description of \LRRQR. We start with the \QR-decomposition of matrix $M$ to obtain matrices $Q$ and $R$. We initialise $R_{11}=0, R_{22}=R$ and build $R_{11}$ incrementally through column pivoting: at each iteration we permute a column of $R_{22}$ to the first position, through $\Pi$. Then, we compute the \QR-decomposition again,  drop the first row and column of the resulting $R$, and proceed recursively on it. The SVD serves in finding the pivot column: if $v \in \mathbb{R}^n$ is the right singular vector corresponding to the largest singular value, then successive permutations such that $|(\Pi^Tv)|_1=\|v\|_{\infty}$ lead to a provably small $\|R_{22}\|_F$.

{\bf Fair pivoting.}
Note that \LRRQR may introduce unfairness, since we factorize the matrix $M$, without considering the error of the two groups, $A$ and $B$ separately. So the pivoted columns may benefit only one group. To address this, we adapt the pivoting strategy of \LRRQR to benefit the group that suffers the worse reconstruction error. Thus, we perform simultaneous \RRQR on $A$ and $B$ and at step-$i$ obtain the corresponding $Q^A(i) ,R^A(i)$ and $Q^B(i),R^B(i)$. We inspect the spectra of both $R^A(i)$ and $R^B(i)$ and choose the pivot based on the following strategy: we choose the right singular vector $v$ of either $R^A(i)$ or $R^B(i)$  corresponding to $\max \{\sigma_1(R^A_{22}(i)),\sigma_1(R^B_{22}(i))\}$, and select $\Pi$ such that $|(\Pi^Tv)|_1=\|v\|_{\infty}$. The algorithm pseudocode is available in Algorithm~\ref{alg:Fair_L_RRQR}. 

\begin{algorithm}[h]
\small
	\caption{\label{alg:Fair_L_RRQR}Fair L-{\sc RRQR}}
	
	Input: QR factorizations $A\Pi^A=Q^AR^A$, $B\Pi^B=Q^BR^B$, $k$\\
	Output: permutation $\Pi_k$ 
	\begin{algorithmic}[1]
		\FOR{i=1,\ldots,k}
		\STATE $R_{22}^A \gets R^A[i:,i:]$,  $R_{22}^B \gets R^B[i:,i:]$

  \STATE  $v \gets$ $\max \{\sigma_1(R^A_{22}(i)),\sigma_1(R^B_{22}(i))\}$
		\STATE Compute permutation $P$ such that $|(P^Tv)_1|=\|P^Tv\|_{\infty}$
		\STATE Compute QR fact. $R^A_{22}P=Q^A_1\tilde{R}^A_{22}$ and $R^B_{22}P=Q^B_1\tilde{R}^B_{22}$
		\STATE $\Pi \gets \Pi\begin{pmatrix}
		I & 0 \\ 0 & P
		\end{pmatrix}$
		\STATE $R^A \gets \begin{pmatrix}
		R^A_{11} & {Q_1^A}^TR^A_{12} \\ 0 & \tilde{R}^A_{22}
		\end{pmatrix}$ and $R^B \gets \begin{pmatrix}
		R^B_{11} & {Q_1^B}^TR^B_{12} \\ 0 & \tilde{R}^B_{22}
		\end{pmatrix}$
		\ENDFOR
	\end{algorithmic}
	{\bf return} $S$
\end{algorithm}

\section{Experiments}
\label{sec:experiments}
This section describes the experimental setup, datasets used, and presents
the experimental evaluation results.

{\bf Experimental setup.}
Our implementation is written in \texttt{python}. We use \texttt{numpy}, \texttt{scipy} and \texttt{scikit-learn} for preprocessing, linear algebra operations as well as parallelization. Experiments are executed on a compute node with 32 cores and 256GB of RAM. Our implementations are available as open source~\citep{sourcecode}.

{\bf Datasets.} We use juvenile recidivism data (\recidivism) from Cata\-lunya~\citep{tolan2019machine} and medical expenditure survey data 2015 (\meps)~\citep{cohen2009medical}, as well as various datasets from the UCI-ML repository~\citep{dua2019uci}:``heart-cleveland" (\heart), ``adult" (\adult), ``german-credit" (\german), ``credit-card" (\credit), ``student performance" (\student), ``compas-recidivism" (\compas), ``communities" (\communities). Data is processed by removing protected attributes, converting categorical variables to one-hot encoding and normalizing each column to unit $L^2$-norm. Group membership is based on Sex, except for \communities where group membership is majority white or a non-white community. Dataset statistics are reported in Table~\ref{table:dataset-stats}.

\begin{table}
\caption{Dataset statistics. $m_A$ and $m_B$ are the number of instances in groups $A$ and $B$, respectively. $n$ is the number of columns. $\gamma$($A$), $\gamma$($B$) is rank of $A$, $B$.}
\label{table:dataset-stats}
\centering
\small
\begin{tabular}{l r r r r r r}
\toprule
Dataset & $n$ & $m_A$ & $m_B$ & $\gamma$($A$) & $\gamma$($B)$\\
\midrule
      \heart &     14 &     201 &      96 &     13 &     13 \\
     \german &     63 &     690 &     310 &     49 &     47 \\
     \credit &     25 & 18\,112 & 11\,888 &     24 &     24 \\
    \student &     58 &     383 &     266 &     42 &     42 \\
      \adult &    109 & 21\,790 & 10\,771 &     98 &     98 \\
     \compas &    189 &  9\,336 &  2\,421 &    167 &     73 \\
\communities &    104 &  1\,685 &     309 &    101 &    101 \\
 \recidivism &    227 &  1\,923 &     310 &    175 &    113 \\
      \meps  & 1\,247 & 18\,414 & 17\,013 & 1\,217 & 1\,200 \\
\bottomrule
\end{tabular}
\end{table}

{\bf Experimental evaluation.}
Our experiments are in threefold. First, we assess the efficacy of the proposed algorithms in addressing the problem of \faircssminmax. This evaluation involves comparing the performance of the
proposed algorithms, considering various experimental setups.
Second, we evaluate the effectiveness of the \faircssminmax objective in selecting column subsets that result in fair reconstruction errors. Specifically, we compare the reconstruction errors of each group in the optimal solutions obtained using the vanilla \css objective versus the \faircssminmax objective.
Last, we investigate the \emph{price of fairness}. This entails verifying potential trade-offs or costs associated to attain fairness according to \faircssminmax.

\subsection{Algorithms Evaluation} 
{\bf Algorithms.}
We refer Algorithm~\ref{alg:fairnessScores} as \sampler and fair \LRRQR as \lowqr. We also consider the fair version of a variant of \LRRQR, called \HRRQR \cite{chan1987rank}, (\highqr). For details on this algorithm see Supplementary~\ref{sup:sec:fair_lrrqr}.
The complexity of our algorithms is dominated by SVD. At each step, \lowqr and \highqr require $\bigO(k)$ and $\bigO(n-k)$ time, respectively. On the other hand, \sampler computes SVD once for each group, and requires $\bigO(n\log n)$ for sorting tuples.

We complement our algorithms with a \greedy algorithm: at each step it picks the column with the highest direct gain according to \minmaxloss. The complexity is dominated by matrix multiplication $\bigO(n^\Omega)$. Finally, in \random, we randomly sample $k$-column set for $100$ repetitions and choose a set with best score. 

{\bf Picking $c\geq k$ columns.}
Recall that \sampler chooses columns based on threshold $\theta$, and it can choose $c\geq k$ columns. In the first experiment, we evaluate the algorithms performance with respect to a specific low-rank subspace of $A$ and $B$ over different values of $c$. Thus, we evaluate the performance according to \minmaxloss while keeping $\|A-A_k\|_F$ and $\|B-B_k\|_F$ fixed. We perform this for six largest datasets for $k=20$ (for \meps, $k=50$). Figure \ref{fig:obj_vs_c} shows the results.
\begin{figure*}[h]
	\centering
	\includegraphics[width=1.0\textwidth]{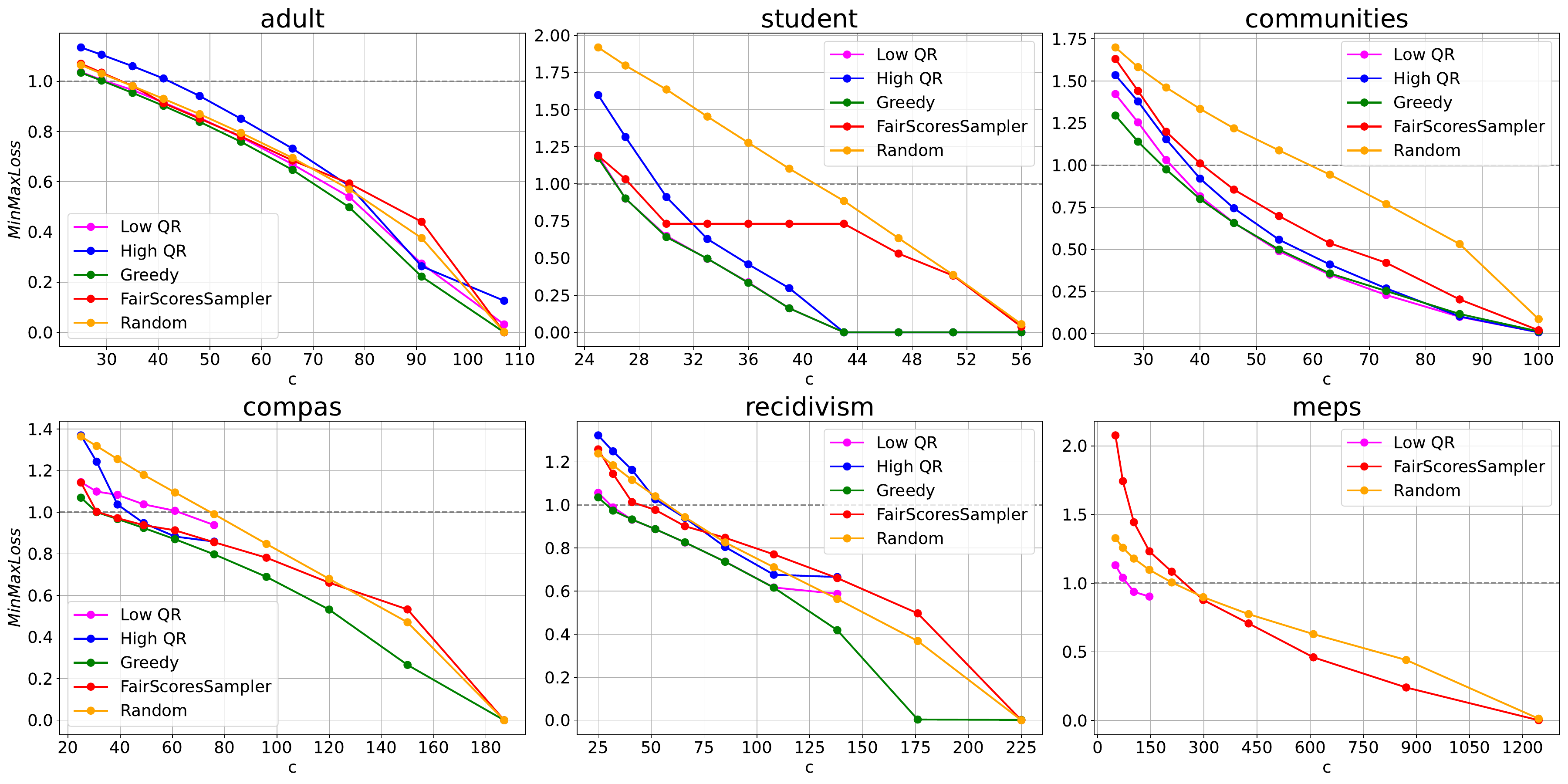}
    \caption{\minmaxloss for different values of $c$ and fixed target rank $k$}
    \label{fig:obj_vs_c}
\end{figure*}
Note that \lowqr and \highqr can only sample at most $\min(rank(A),rank(B))$ columns. \lowqr shows relatively good performance compared to \greedy, which has the best performance. In \meps, \greedy and \highqr, due to their higher complexity did not terminate within 24 hours. Note that in \meps, \sampler performs worse than \random for $c<300$, which implies the rank $50$ leverage scores do not decay quickly for $c<300$, and thus are not very informative.  In Supplementary~\ref{sup:sec:decay}, we plot the rank-$k$ leverage scores for datasets for different values of $k$. 

{\bf Two-stage sampling.}
Recall that, \faircssminmax asks for exactly $k$ columns. To combine the efficiency of \sampler and effectiveness of algorithms such as \greedy, we introduce and evaluate a two-stage sampling approach that returns exactly $k$-columns. Similar ideas have been explored in \css literature~\citep{boutsidis2008unsupervised}. In the first stage, we run \sampler that takes as input a threshold $\theta$ and the rank-$k$ leverage scores of $A$ and $B$, and returns $c\geq k$ columns. In the second stage, we run \lowqr, \highqr or \greedy on the subset of columns returned in the first stage, to obtain a column subset of size $k$. We refer to these methods as \slowqr, \shighqr and \sgreedy. 
\begin{table*}
	\caption{Performance comparison of algorithms.}
	\label{table:performance}
	\centering
\begin{tabular}{c@{\hspace{1em}}c@{\hspace{1em}}c@{\hspace{1em}}@{\hspace{1em}}c@{\hspace{1em}}c@{\hspace{1em}}c@{\hspace{1em}}c@{\hspace{1em}}c@{\hspace{1em}}c}
	Dataset & $c$ & $k$ & \multicolumn{6}{c}{MinMax Loss} \\ 
	& & & \slowqr
	& \shighqr
	& \sgreedy
	& \lowqr
	& \greedy
	& \random
	\\ 
	\toprule
	\multirow{3}{*}{\communities} & 89 & 10 & 1.27323 & 1.50763 & 1.17121 & 1.25939 & \textbf{1.16976} & 1.47485 \\ 
	& 94 & 23 & 1.32462 & 1.54204 & 1.2717 & 1.41701 & \textbf{1.23969} & 1.65518 \\ 
	& 98 & 51 & 1.40469 & 1.57761 & 1.42658 & 1.40934 & \textbf{1.39823} & 2.95985 \\ 
	\midrule
	\multirow{3}{*}{\compas} & 118 & 10 & 1.04747 & 1.2738 & 1.03356 & 1.05041 & \textbf{1.03057} & 1.19477 \\ 
	& 165 & 19 & 1.08617 & 1.34321 & 1.05688 & 1.08617 & \textbf{1.0537} & 1.27599 \\ 
	& 177 & 37 & 1.37174 & 1.41811 & 1.14835 & 1.47138 & \textbf{1.1291} & 1.67851 \\ 
	\midrule
	\multirow{3}{*}{\adult} & 70 & 10 & 1.02345 & 1.09485 & 1.02111 & 1.02345 & \textbf{1.01768} & 1.05641 \\ 
	& 96 & 22 & \textbf{1.03347} & 1.12764 & 1.0374 & \textbf{1.03347} & - & 1.0589 \\ 
	& 103 & 49 & \textbf{1.07796} & 1.19301 & 1.40252 & 1.08317 & - & 1.0994 \\ 
	\midrule
	\multirow{3}{*}{\german} & 53 & 10 & 1.08088 & 1.30176 & 1.08488 & 1.07711 & \textbf{1.07349} & 1.14205 \\ 
	& 54 & 15 & 1.1439 & 1.34599 & 1.11798 & 1.11871 & \textbf{1.11088} & 1.1966 \\ 
	& 54 & 24 & 1.20605 & 1.38489 & 1.192 & 1.20246 & \textbf{1.18624} & 1.36138 \\ 
	\midrule
	\multirow{3}{*}{\recidivism} & 134 & 10 & 1.02485 & 1.17864 & 1.02313 & 1.02236 & \textbf{1.01483} & 1.12757 \\ 
	& 174 & 24 & 1.05569 & 1.29805 & 1.04054 & 1.05567 & \textbf{1.03202} & 1.22332 \\ 
	& 212 & 57 & 1.31688 & 1.52031 & 1.16871 & 1.27495 & \textbf{1.13311} & 1.59933 \\ 
	\midrule
	\multirow{3}{*}{\student} & 45 & 10 & 1.10833 & 1.42094 & \textbf{1.10559} & 1.11333 & 1.10597 & 1.17856 \\ 
	& 46 & 14 & 1.14467 & 1.39265 & 1.14375 & 1.15592 & \textbf{1.14361} & 1.26605 \\ 
	& 47 & 21 & 1.18932 & 1.5756 & \textbf{1.17832} & 1.209 & 1.18771 & 1.56265 \\ 
	\midrule
	\multirow{3}{*}{\meps} & 428 & 10 & 1.14642 & 1.83209 & - & \textbf{1.05601} & - & 1.17759 \\ 
	& 382 & 32 & 1.20093 & 1.85843 & 1.777 & \textbf{1.11665} & - & 1.26749 \\ 
	& 338 & 100 & \textbf{1.33916} & 1.70488 & 2.48787 & - & - & 1.47403 \\ 
	\midrule
\end{tabular}
\end{table*}

Table \ref{table:performance} reports the results for the seven largest datasets for various values of $k$. We set $\theta = k-\frac{1}{2}$ in all cases, except \meps, where $\theta = \frac{3k}{4}$ to reduce the number of sampled columns. Thus in all datasets (except \meps) the resulting number of columns in the first stage, is a $\sqrt{2}$-approximation to the optimal number of columns $c$. 
Column ``$c$" in Table \ref{table:performance} indicates the number of columns returned in the sampling stage. Note that in some cases the number of columns is required to satisfy $\theta$ is significantly large. For each experiment, the best performing algorithm is highlighted in {\bf bold}.

We observe that \greedy performs better in most cases, but does not terminate always within twenty-four hours. \sgreedy is faster than \greedy, due to the sampling stage, and the objective values are close. On the other hand, \shighqr does not perform well in practice, though in theory we expect it to perform better for $k$ closer to $n$. Finally, we demonstrate that (see \meps) small $k$ does not mean fewer columns are sampled in the first stage, because the lower-rank leverage scores decay faster; thus more columns are required to satisfy threshold $\theta$. A visualization of the decay of leverage scores is reported in the Supplementary~\ref{sup:sec:decay}.

\begin{figure*}
\centering
\setlength{\tabcolsep}{0.01cm}
\renewcommand{\arraystretch}{0.01}
\begin{tabular}{ccc}
\includegraphics[width=0.33\textwidth]{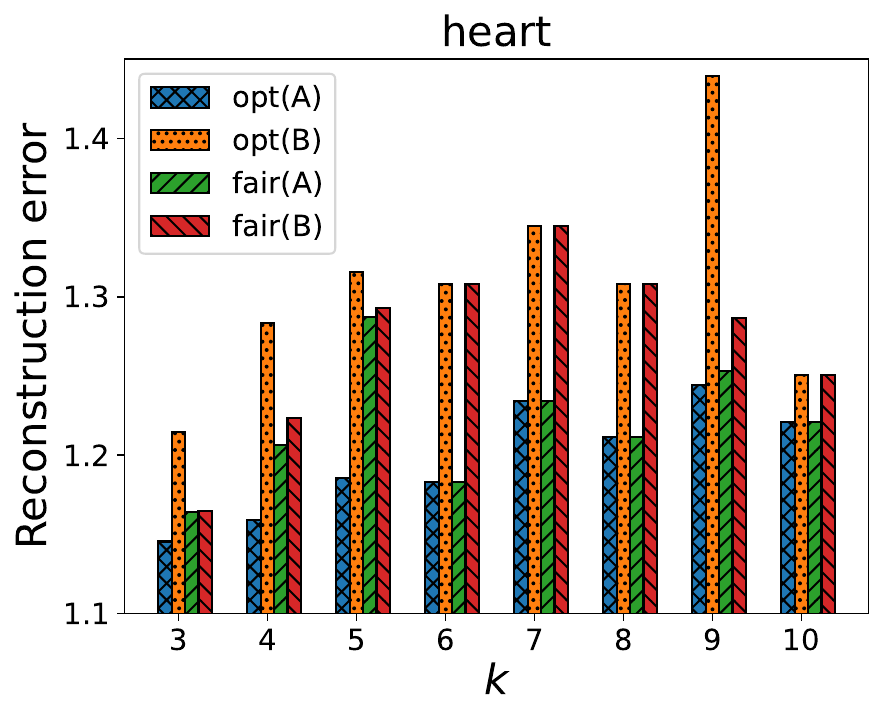} &
\includegraphics[width=0.33\textwidth]{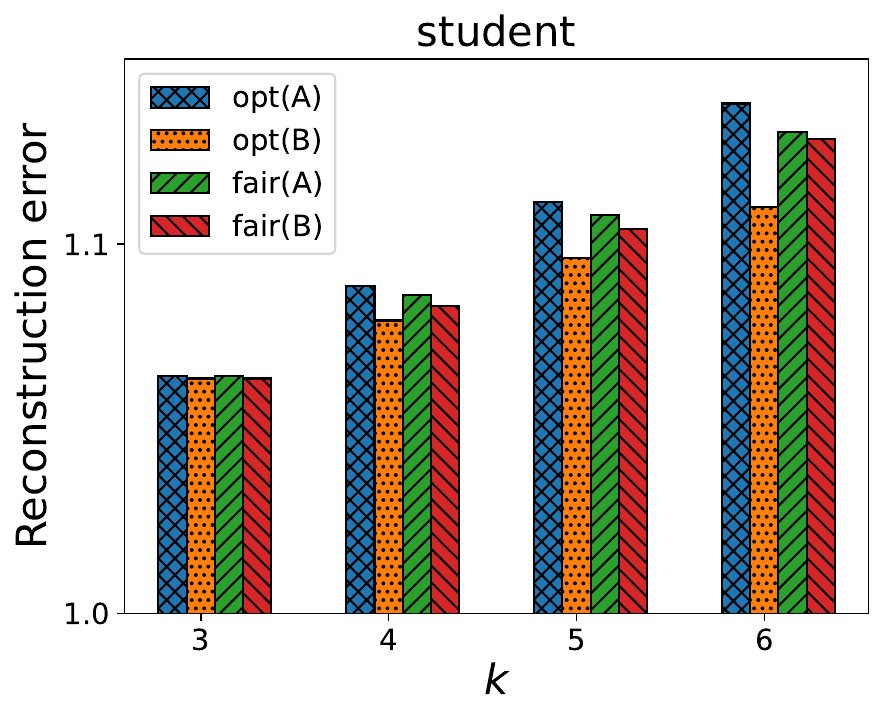} &
\includegraphics[width=0.33\textwidth]{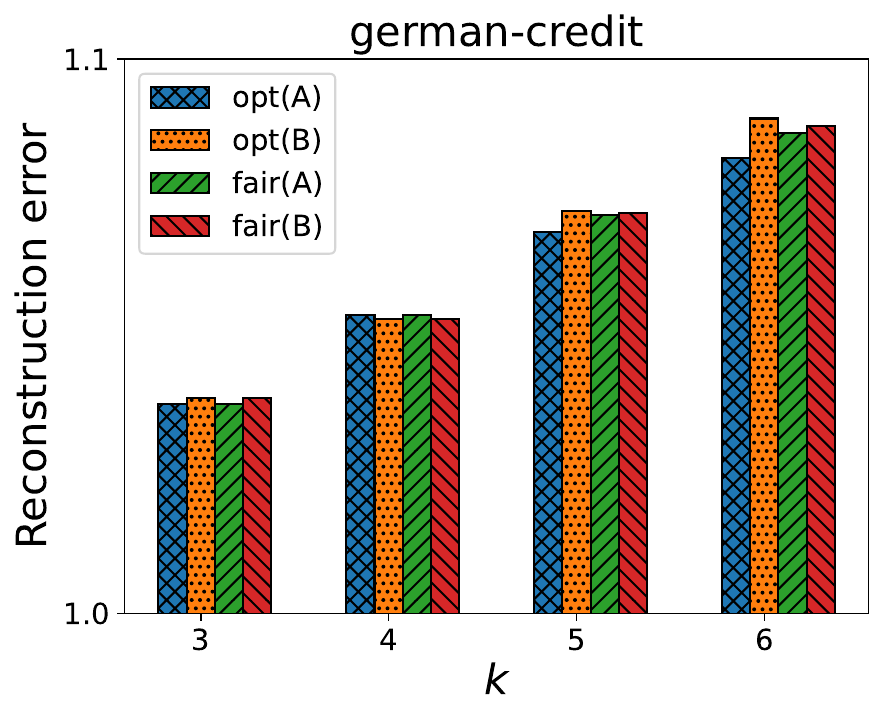}
\end{tabular}
\caption{Comparison of reconstruction error of \css and \faircss for groups $A$ and $B$.}
\label{fig:priceoffairness-1}
\end{figure*}

\begin{figure*}
\centering
\centering
\setlength{\tabcolsep}{0.01cm}
\renewcommand{\arraystretch}{0.01}
\begin{tabular}{ccc}
\includegraphics[width=0.33\textwidth]{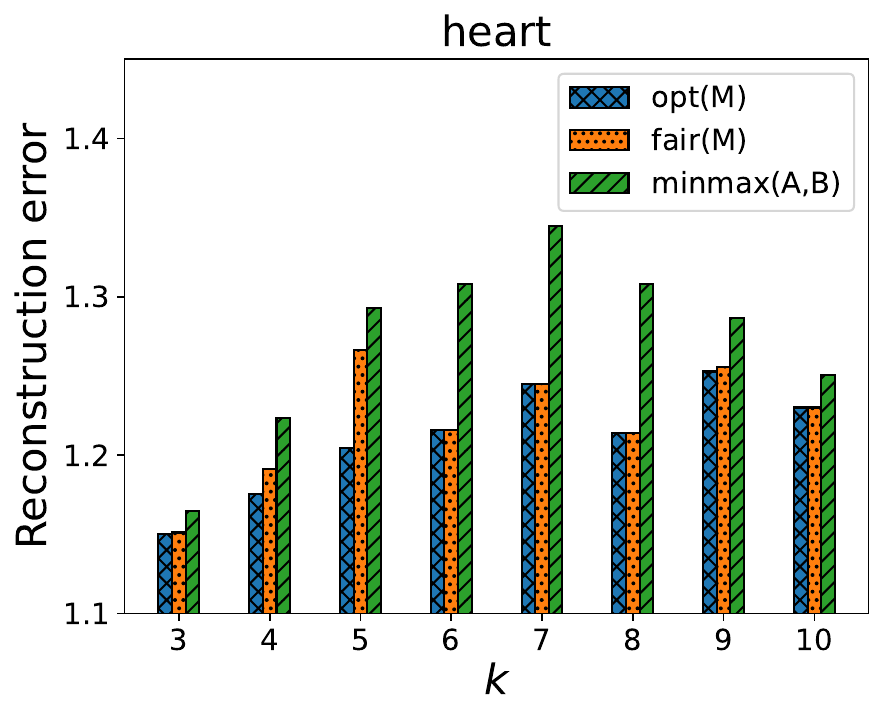} &
\includegraphics[width=0.33\textwidth]{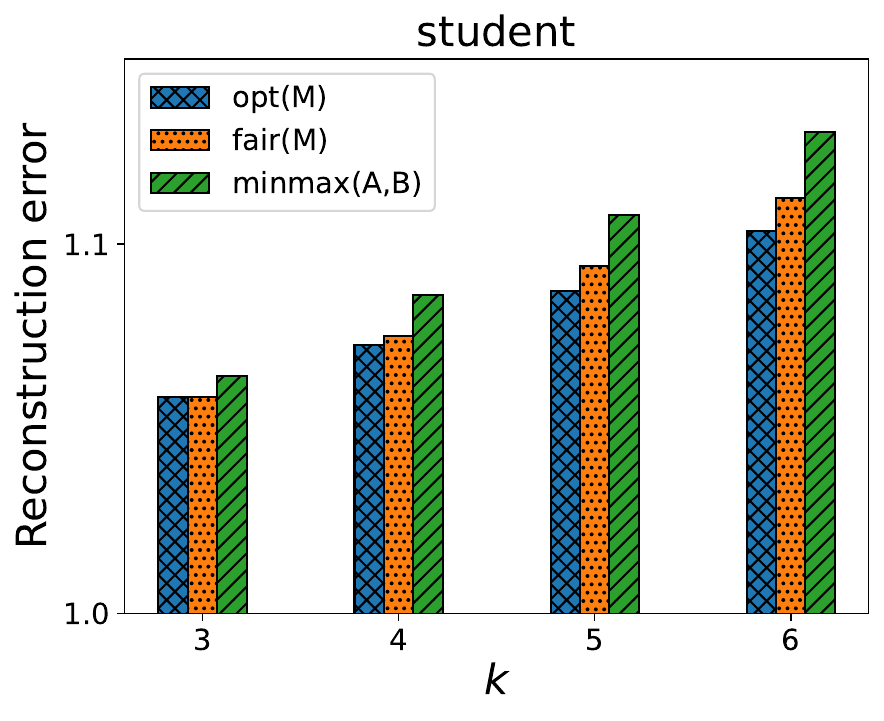} &
\includegraphics[width=0.33\textwidth]{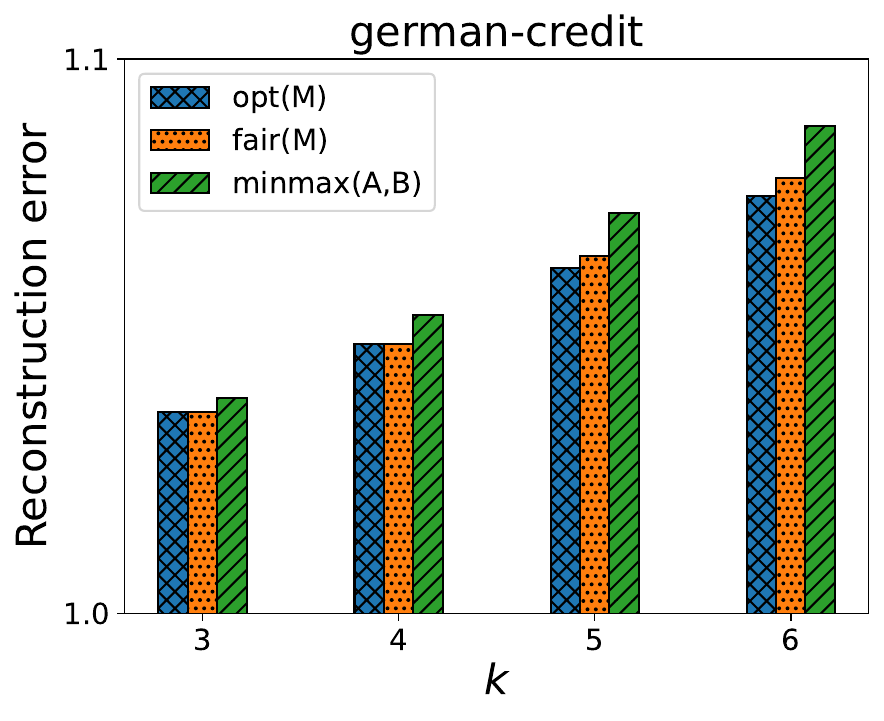}
\end{tabular}
\caption{Price of fairness.}
\label{fig:priceoffairness-2}
\end{figure*}

\subsection{Evaluation of the fair \css objective}
We examine the imbalance in reconstruction errors of the ``vanilla" \css objective of Problem~\ref{prob:CSS} and \faircssminmax objective for groups $A$ and $B$. We compute the optimal solution for each objective for various $k$ through exhaustive enumeration. $\opt(M), \opt(A)$ and $\opt(B)$ denote the reconstruction error of vanilla \css for matrices $M, A$ and $B$, respectively. $\minmax(M) = \minmax(A,B)$ denote the reconstruction error of matrix $M$. Further, $\fair(A)$ and $\fair(B)$ denote the reconstruction errors of \faircssminmax for groups $A$ and $B$, respectively. Lastly, $\fair(M)$ denote the reconstruction error of $M$ corresponding to the optimal solution of \faircssminmax. We note that brute-force enumeration is expensive, even after parallelization and extensive optimization, so we only report results for \heart, \student and \german datasets.

{\bf Fairness of solutions.}
In Figure~\ref{fig:priceoffairness-1} we compare $\opt(A)$, $\opt(B)$, that is, the optimal solution of vanilla \css, and the optimal solution of \faircssminmax $\fair(A), \fair(B)$. In most instances, we observe that the reconstruction errors of groups are disproportionate in vanilla
\css. However, the degree of imbalance is not uniform across datasets. 
One source of imbalance could be vastly different group sizes. However, as observed in \student, $|A| > |B|$, but $\opt(A) < \opt(B)$.
This further supports the need for sophisticated approaches to fairness in \css, beyond mere normalization.

{\bf Price of fairness.} 
Next, we verify how the fairness objective influences the quality of the solution in terms of reconstruction error. In Figure~\ref{fig:priceoffairness-2}, we report the optimal solution of vanilla \css and \faircssminmax to assess the trade-off between fairness and reconstruction error. This analysis quantifies the extent to which we sacrifice reconstruction error to achieve our fairness objective. In most cases, we observe no significant difference in reconstruction error between vanilla \css and \faircssminmax, that is, $\opt(M)$ and $\fair(M)$, even though the value of $\minmax(M)$ is significantly higher than $\opt(M)$. Note that, the observations cannot be generalized across all datasets, and we cannot conclusively claim that there is no trade-off between fairness and reconstruction error. For instance, in the case of the \heart dataset, we observe a significant difference between values of $\opt(M)$ and $\fair(M)$ for $k=5$.

\section{Conclusion} \label{sec:conclusion}
We introduced a novel \css variant for a fair setting when the matrix rows are partitioned into two groups. Our goal is to minimize the reconstruction error for both groups via a min-max objective. We utilized leverage scores to present an approximation algorithm when the column count is relaxed, and presented rank revealing \QR-factorisation-based algorithms when the column count is fixed. Extensive experiments on real-world data validated the effectiveness of our approach in improving fairness.
As future work, a natural direction is to extend our results to more than two groups. Also, improving the approximation ratio of \minfairnessscores requires further investigation.

\section*{Acknowledgements}
Suhas Thejaswi acknowledges support from the European Research Council (ERC) under the European Union'{}s Horizon $2020$ research and innovation program (grant agreement No. $945719$) and the European Unions'{}s SoBigData++ Transnational Access Scholarship.
Antonis Matakos acknowledges support from the Academy of Finland through the grant "Model Management Systems: Machine learning meets Database Systems"- MLDB (32511).

\bibliographystyle{ACM-Reference-Format}
\bibliography{citations}

\appendix
\section{PSEUDOCODE of Fair H-RRQR} \label{sup:sec:fair_lrrqr}
High rank revealing \QR-factorisation (\HRRQR) is similar to low rank revealing \QR-factorisation (\LRRQR), for details of \LRRQR see Section~\ref{sec:fair_qr} of the main paper. In \HRRQR, we begin with $R_{11}=R, R_{22}=0$ and recursively build $R_{22}$ by moving columns to the back. The fair variant of \HRRQR, at step-$i$, 
 chooses the right singular vector $v$ corresponding to $\min \{\sigma_i(R^A_{11}(i)),\sigma_i(R^B_{11}(i))\}$, where $\sigma_i$ is the bottom singular value. Then, we construct a permutation $\Pi_{i+1}$ such that $|(\Pi_{i+1}^Tv)|_i=\|v\|_{\infty}$. 

\begin{algorithm}
\small
	\caption{\label{alg:Fair_H_RRQR}Fair H-{\sc RRQR}}
	
	Input: QR factorizations $A\Pi^A=Q^AR^A$, $B\Pi^B=Q^BR^B$, $k$\\
	Output: permutation $\Pi_k$ 
	\begin{algorithmic}[1]
		\FOR{i=n,\ldots,n-k+1}
		\STATE $R_{11}^A \gets R^A[:i,:i]$,  $R_{11}^B \gets R^B[:i,:i]$
		\STATE  $v \gets$ $\min \{\sigma_i(R^A_{11}(i)),\sigma_i(R^B_{11}(i))\}$ 
		\STATE Compute permutation $P$ such that $|(P^Tv)_i|=\|P^Tv\|_{\infty}$
		\STATE Compute QR fact. $R^A_{11}P=Q^A_1\tilde{R}^A_{11}$ and $R^B_{11}P=Q^B_1\tilde{R}^B_{11}$
		\STATE $\Pi \gets \Pi\begin{pmatrix}
		P & 0 \\ 0 & I
		\end{pmatrix}$
		\STATE $R^A \gets \begin{pmatrix}
		\tilde{R}^A_{11} & {Q_1^A}^TR^A_{12} \\ 0 & R^A_{22}
		\end{pmatrix}$ and $R^B \gets \begin{pmatrix}
		\tilde{R}^B_{11} & {Q_1^B}^TR^B_{12} \\ 0 & R^B_{22}
		\end{pmatrix}$
		\ENDFOR
	\end{algorithmic}
	{\bf return} $S$
\end{algorithm}

\section{Decay of Leverage Scores}
\label{sup:sec:decay}

We plot the leverage scores of groups $A$ and $B$ for the experiments in Table \ref{table:performance}. The leverage scores are sorted separately for the two groups and plotted in decreasing order of their value.

\begin{figure*}[h]
	\centering
	\includegraphics[width=0.8\textwidth]{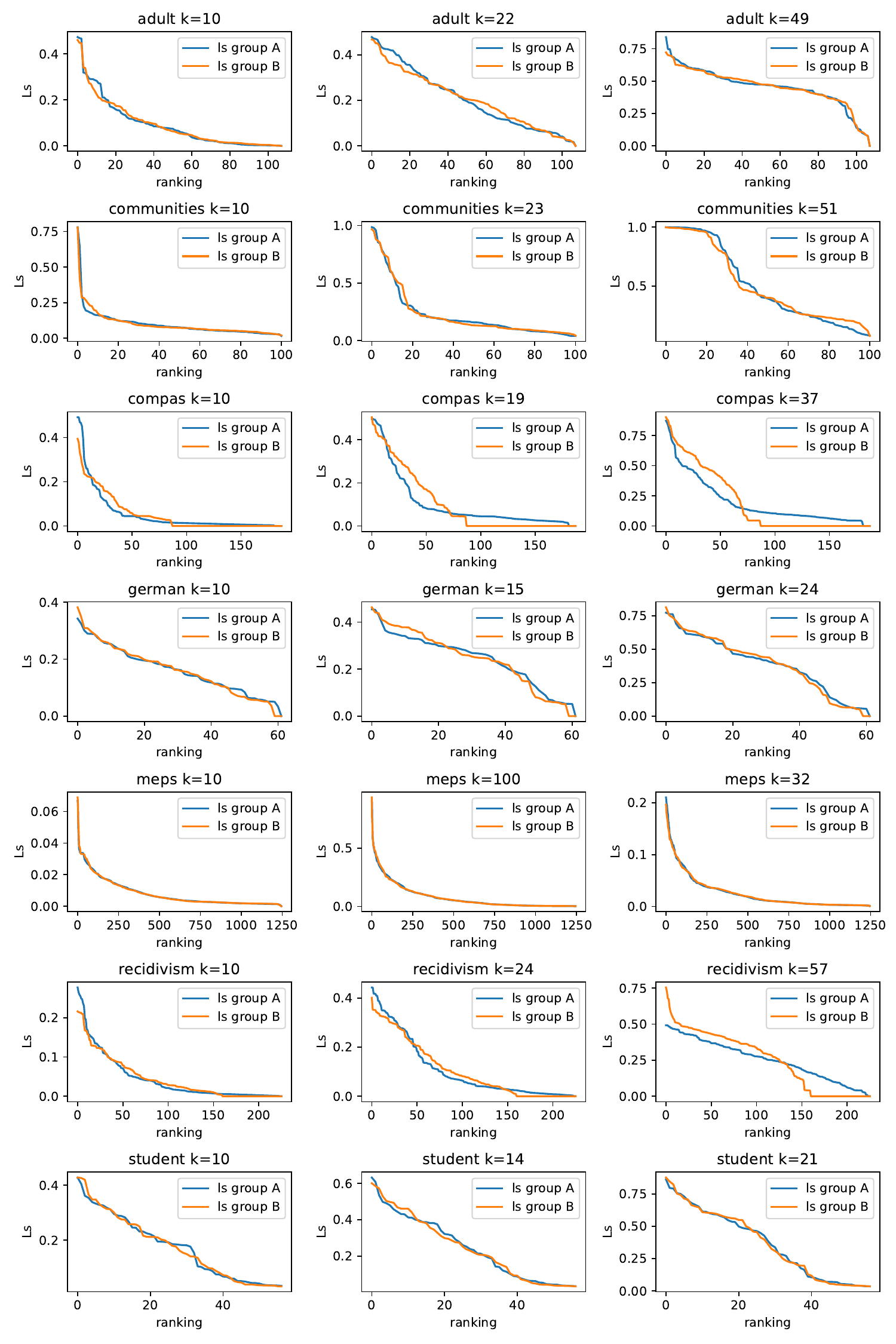}
	\caption{Leverage scores of $A$ and $B$ for Table \ref{table:performance}}
	\label{fig:ls}
\end{figure*}

\end{document}